\documentclass[10pt,conference]{IEEEtran}
\IEEEoverridecommandlockouts

\usepackage{amsmath,amssymb,amsfonts}
\usepackage{algorithmic}
\usepackage{graphicx}
\usepackage{textcomp}
\usepackage{xcolor}
\usepackage{dblfloatfix}
\usepackage{placeins}
\usepackage{balance}

\makeatletter
\newcommand{\linebreakand}{
  \end{@IEEEauthorhalign}
  \hfill\mbox{}\par
  \mbox{}\hfill\begin{@IEEEauthorhalign}
}
\makeatother
\usepackage{etoolbox}
\usepackage{placeins}

\usepackage{graphicx}
\graphicspath{{./figures/}}

\usepackage[numbers]{natbib}
\usepackage{xspace}

\usepackage[x11names, rgb]{xcolor}
\usepackage{soul}

\definecolor{darkgreen}{HTML}{2C6E49}

\usepackage{subcaption} 
\usepackage{wrapfig}

\usepackage{tikz,tikz-3dplot}
\usepackage{tikzscale}
\usepackage{pgfplots}
\usepgfplotslibrary{colormaps}
\pgfplotsset{compat=1.18}

\usetikzlibrary{calc,fadings,decorations.pathmorphing,decorations.text,arrows,matrix,shapes,shapes.misc,patterns,plotmarks,positioning,intersections,3d,scopes,external}
\pgfdeclarelayer{back}
\pgfdeclarelayer{front}
\pgfsetlayers{back,main,front}

\tikzset{zlevel/.style={
    execute at begin scope={\pgfonlayer{#1}},
    execute at end scope={\endpgfonlayer}
}}

\tikzset{every picture/.style={font issue=\footnotesize},
    font issue/.style={execute at begin picture={#1\selectfont}}
}

\usepackage{amsmath}
\usepackage{amssymb}
\usepackage{amsthm}
\usepackage{physics}
\usepackage{mathtools}
\usepackage{nicefrac}
\usepackage{bm}
\usepackage{enumerate}
\usepackage{enumitem}
\newcounter{oldsection}
\newcounter{oldtheorem}
\theoremstyle{plain}

\makeatletter
\newcommand{\proofpart}[1]{
  \par
  \addvspace{\medskipamount}
  \noindent\emph{#1}\par\nobreak
  \addvspace{\smallskipamount}
  \@afterheading
}
\makeatother

\newtheorem{proposition}{Proposition} 

\newlength\newl
\newlength\newh
\newlength\colwidth
\usepackage{algorithm}
\usepackage{algorithmic}

\usepackage{comment}
\usepackage{csquotes}

\usepackage{array}
\usepackage{multirow}
\usepackage{tabularx}
\usepackage{booktabs}
\newcolumntype{C}[1]{>{\centering\arraybackslash}p{#1}}
\newcolumntype{L}[1]{>{\raggedright\arraybackslash}p{#1}}
\newcolumntype{R}[1]{>{\raggedleft\arraybackslash}p{#1}}

\usepackage[table]{xcolor} 
\usepackage{colortbl}      
\definecolor{Gray}{gray}{0.85}
\definecolor{NewGray}{gray}{.45} 
\definecolor{BlueGray}{rgb}{0.92, 0.92, 1}
\definecolor{LightCyan}{rgb}{0.88,1,1}
\definecolor{LightGreen}{rgb}{0.8, 0.99, 0.95}
\definecolor{DarkGreen}{rgb}{.1, .75, .1}
\definecolor{DarkRed}{rgb}{.95, .0, .1}
\definecolor{GrayGreen}{rgb}{0.6,0.85,0.6}
\definecolor{GrayRed}{rgb}{0.85,0.6,0.6}
\definecolor{bluey}{rgb}{0.0,0.6,0.6}

\newcommand{\cityscapes}{\textsc{Cityscapes}\xspace}
\newcommand{\voc}{\textsc{Pascal-Voc}\xspace}
\newcommand{\ade}{\textsc{Ade20K}\xspace}

\newcommand{\method}{TsallisPGD}
\newcommand{\acc}{\textsc{Acc}\xspace}
\newcommand{\miou}{mIoU\xspace}

\usepackage{url}
\usepackage{hyperref}
\hypersetup{
  colorlinks=true,
  urlcolor=blue,
  linkcolor=black,
  citecolor=black
}

\usepackage{amsmath,amsfonts,bm}

\def\eqref#1{equation~\ref{#1}}
\def\Eqref#1{Equation~\ref{#1}}

\def\1{\bm{1}}

\def\vx{{\bm{x}}}

\DeclareMathAlphabet{\mathsfit}{\encodingdefault}{\sfdefault}{m}{sl}
\SetMathAlphabet{\mathsfit}{bold}{\encodingdefault}{\sfdefault}{bx}{n}

\newcommand{\softmax}{\mathrm{softmax}}

\begin{document}

\setlength{\skip\footins}{4pt plus 1pt minus 1pt}

\title{TsallisPGD: Adaptive Gradient Weighting for Adversarial Attacks on Semantic Segmentation
	\thanks{Accepted to the 2026 IJCNN (author’s accepted version). \copyright~2026 IEEE.}}

\author{
	\IEEEauthorblockN{Alexander Matyasko, Xin Lou, Indriyati Atmosukarto, Wei Zhang}
	\IEEEauthorblockA{
		\textit{Singapore Institute of Technology}, Singapore\\
		\{alexander.matyasko, lou.xin, indriyati, wei.zhang\}@singaporetech.edu.sg
	}
}

\maketitle

\begin{abstract}
	Attacking semantic segmentation models is significantly harder than image classification models because an attacker must flip thousands of pixel predictions simultaneously.
	Standard pixel-wise cross-entropy (CE) is ill-suited to this setting: it tends to overemphasize already-misclassified pixels, which slows optimization and overstates model robustness.
	To address these issues, we introduce TsallisPGD, an adversarial attack built on the Tsallis cross-entropy, a generalization of CE parameterized by $q$, which adaptively reshapes the gradient landscape by controlling gradient concentration across pixels.
	By varying $q$, we steer the attack toward pixels at different confidence levels.
	We first show that no single fixed-$q$ is universally optimal, as its effectiveness depends on the dataset, model architecture, and perturbation budget.
	Motivated by this, we propose a dynamic $q$-schedule that sweeps $q$ during optimization.
	Extensive experiments on Cityscapes, Pascal VOC, and ADE20K show that TsallisPGD, using a single validation-selected schedule, achieves the best average attack rank across all evaluated settings and improves over CEPGD, SegPGD, CosPGD, JSPGD, and MaskedPGD in reducing accuracy and mIoU on both standard and robust models.
\end{abstract}

\begin{IEEEkeywords}
	adversarial attacks, adversarial robustness, semantic segmentation.
\end{IEEEkeywords}
\section{Introduction}

Deep neural networks~(DNNs) have proven remarkably effective at tackling complex computer vision tasks, such as image classification~\cite{szegedy2016inception,he2015deep}, object detection~\cite{redmon2016yolo,liu2016ssd}, semantic segmentation~\cite{chen2018deeplab,xie2021segformer}, depth estimation~\cite{godard2019digging,bhat2021adabins}, and optical flow estimation~\cite{ranjan2017optical,teed2020raft}.
These advances have significant applications in autonomous driving~\cite{yurtsever2020survey} and biomedical image analysis~\cite{litjens2017survey}.
However, despite these impressive advancements, deep neural networks face critical challenges that limit their broader adoption in safety- and security-critical applications, such as self-driving cars and medical imaging.
Specifically, DNNs are sensitive to distribution shifts such as changes in lighting and weather conditions~\cite{taori2020measuring,recht2019dogenerailze}.
Most critically, state-of-the-art DNNs are vulnerable to adversarial attacks, which can craft small, imperceptible input perturbations that cause severe and unexpected model failures~\cite{szegedy2014intriguing,goodfellow2014explaining}.

Since the discovery of adversarial attacks against image classification models, significant progress has been made in evaluating and improving the robustness of image classification models~\cite{croce2020reliable, madry2017towards}.
Recent research has broadened to consider a wider range of prediction tasks~\cite{xie2017adversarial,thys2019fooling} and input modalities~\cite{jin2023plalidar, liang2018deeptext}.
Semantic segmentation presents unique challenges for adversarial robustness compared to image classification.
Although both types of attacks aim to optimize a loss function that measures attack success, adversarial attacks on semantic segmentation are significantly more challenging because they must alter predictions for all pixels simultaneously.
Likewise, improving the robustness of semantic segmentation models is more difficult, as any adversarial defense must ensure that predictions for all pixels remain robust to all feasible perturbations.

Early approaches for attacking semantic segmentation models~\cite{xie2017adversarial,arnab2018robustness} directly adapted the projected gradient descent (PGD) attack to the semantic segmentation setting.
These attacks maximize the average per-pixel loss, e.g. cross-entropy, across all output pixels, mirroring the training objective.
While maximizing average per-pixel loss is a straightforward extension of PGD, it ignores the multi-output nature of segmentation models, leading to suboptimal optimization dynamics.
As a result, PGD-based attacks on semantic segmentation models often suffer from slow convergence, requiring hundreds of attack iterations and potentially leading to an overly optimistic assessment of model robustness.

To address these limitations, recent works have focused on adaptively reweighting per-pixel losses to improve optimization dynamics.
SegPGD~\cite{gu2022segpgd} assigns different weights to correctly and incorrectly classified pixels.
SegPGD dynamically anneals pixel weights, initially focusing exclusively on correctly classified pixels and then gradually shifting attention to a mixture of correctly and incorrectly classified pixels.
CosPGD~\cite{agnihotri2024cospgd} sets pixel weights proportional to the cosine similarity between the original and adversarial outputs, effectively prioritizing pixels where the model is more confident.
Croce et al.~\cite{croce2025towards} further observed that gradients of standard cross-entropy saturate on misclassified pixels, causing attacks to overemphasize pixels that are already successfully attacked.
To mitigate this issue, they proposed alternative objectives with bounded gradients on misclassified pixels, such as the Jensen--Shannon divergence and masked cross-entropy, which yield more stable optimization dynamics.

In this paper, we propose \textbf{TsallisPGD}, an adversarial attack for semantic segmentation built on Tsallis cross-entropy, a generalization of standard cross-entropy controlled by a parameter~$q$.
A key property of Tsallis cross-entropy is that it induces a confidence-dependent reweighting of pixels in gradient space: each pixel is effectively scaled by $p_y^{1-q}$, where $p_y$ is the predicted probability of the ground-truth class.
For $q < 1$, this reshapes optimization by emphasizing high-confidence, correctly classified pixels while down-weighting high-confidence, incorrectly classified ones, making it well suited for segmentation attacks.
We analyze the resulting gradient behavior and show that $q$ controls both the magnitude and the location of the gradient peak.
By adjusting $q$, the attack can thus target different subsets of pixels.
We further show empirically that no single fixed-$q$ is optimal across datasets, architectures, and perturbation budgets.
This motivates a \emph{dynamic} $q$-schedule that sweeps $q$ during optimization enabling the attack to target pixels at different confidence levels.
We select the best $q$-schedule using a held-out validation split and apply the same selected schedule across the entire test benchmark with no further tuning.
Extensive experiments on \cityscapes, \voc, and \ade show that \method{} achieves the best overall average rank across both standard and adversarially trained models.
Our attack improves over CEPGD, SegPGD, CosPGD, JSPGD, and MaskedPGD in reducing both accuracy and mIoU on aggregate.
Code will be available at \url{https://github.com/aam-at/tsallis_pgd}.

To summarize, our main contributions are as follows:
\begin{itemize}
  \item We introduce TsallisPGD, a novel adversarial attack for semantic segmentation that leverages Tsallis cross-entropy to control the attack focus via parameter \( q \).
  \item We provide a theoretical analysis showing that Tsallis cross-entropy induces a confidence-dependent gradient reweighting $p_y^{1-q}$, and demonstrate how~$q$ controls both the location and magnitude of a lower-bound proxy for the gradient peak.
  \item We propose a dynamic $q$-schedule that varies $q$ over the course of optimization, enabling the attack to target pixels at different confidence levels.
  \item We demonstrate on \cityscapes, \voc, and \ade that \method{} achieves the best overall average rank among strong baselines on both standard and adversarially trained models.
\end{itemize}

\section{Related Work}

\textbf{Adversarial attacks against classification models:}
Seminal work by Szegedy et al.~\cite{szegedy2014intriguing} revealed that state-of-the-art image classifiers are highly vulnerable to adversarial perturbations—carefully crafted, small and often imperceptible input changes that can fool the model.
Goodfellow et al.~\cite{goodfellow2014explaining} introduced the Fast Gradient Sign Method (FGSM), a computationally efficient one-step attack that perturbs inputs in the direction of the gradient of the loss with respect to the input.
Kurakin et al.~\cite{kurakin2016adversarialexamples} and Madry et al.~\cite{madry2017towards} introduced stronger iterative attacks based on FGSM, such as BIM (I-FGSM) and PGD, respectively.
Madry et al.~\cite{madry2017towards} further formalized PGD as a universal first-order adversary, which has since become the de facto standard for adversarial robustness evaluation.
AutoPGD~\cite{croce2020reliable} improved upon PGD by detecting optimization stalling and adaptively tuning the step size of projected gradient iterations.
Despite these advances, reliably and accurately evaluating robustness remains challenging due to phenomena such as gradient masking and gradient obfuscation~\cite{athalye2018obfuscated}.
To address these issues, AutoAttack~\citep{croce2020reliable} combines multiple attacks into a unified, parameter-free evaluation suite and is now widely regarded as a standard benchmark for robustness evaluation in image classification.

\textbf{Adversarial attacks against semantic segmentation models:}
Early approaches for attacking semantic segmentation models~\cite{xie2017adversarial,arnab2018robustness} directly adapted classification attacks, such as projected gradient descent (PGD), by applying them in a pixel-wise manner and aggregating cross-entropy losses over all output pixels.
Although this formulation is a natural extension of PGD to semantic segmentation, it ignored the structured, multi-output nature of segmentation models, resulting in slow convergence and suboptimal optimization dynamics.
Consequently, these early works created the mistaken belief that segmentation models are inherently more robust than classification models~\cite{arnab2018robustness}.
However, later works identified that different pixels contribute unequally to the overall loss and showed that, when attacks are stronger and adaptive, segmentation models are as vulnerable as classifiers~\cite{gu2022segpgd}.
SegPGD~\cite{gu2022segpgd} improves attack effectiveness by adaptively reweighting pixel-wise losses based on whether pixels are correctly or incorrectly classified.
Using an adaptive annealing schedule, SegPGD achieves faster convergence and significantly outperforms standard PGD under the same iteration budget.
CosPGD~\cite{agnihotri2024cospgd} weights pixel losses according to the cosine similarity between the original and adversarial outputs, effectively prioritizing pixels where the model is more confident.
Further analyzing the shortcomings of standard objectives, Croce et al.~\cite{croce2025towards} observed that the gradient of the cross-entropy loss saturates on misclassified pixels, causing attacks to overemphasize pixels that have already been successfully attacked.
They introduced new segmentation attacks based on alternative loss functions, including JSPGD based on the Jensen--Shannon divergence~(JS) and MaskedPGD based on masked cross-entropy~(MaskedCE), whose gradients are bounded on already misclassified pixels and yield more stable optimization dynamics.
Similar to AutoAttack for evaluating classification models, they introduced SEA~\cite{croce2025towards}, a robustness evaluation suite that combines multiple attacks to provide a more reliable evaluation of adversarial robustness for semantic segmentation models.

\section{Semantic Segmentation Attacks}\label{sec:adversarial-attack}

Let $f_\theta$ be a semantic segmentation model with parameters $\theta$.
For notational convenience, we represent an input image as a vector $\vx \in [0,1]^D$, obtained by flattening an input image of spatial resolution $H \times W$ with $C$ channels, so that $D = C \cdot H \cdot W$.
The model maps $\vx$ to per-pixel unnormalized class scores or logits $f_\theta(\vx) \in \mathbb{R}^{K \times N}$, where $N = H \cdot W$ denotes the number of pixels (i.e., $K$ logits per pixel).
The predicted class at pixel $i$ is $\hat y_i(\vx) = \arg\max_k f_{k,i}(\vx)$.
Given a ground-truth segmentation map $y \in \{1,\dots,K\}^{N}$, an adversarial attack seeks a perturbation $\delta \in \mathcal{U}$ that degrades segmentation performance on the perturbed image $\vx' = \vx + \delta$, as measured by standard metrics such as pixel accuracy or mean intersection-over-union (mIoU).
In this work, we focus on $\ell_\infty$-bounded perturbations, i.e., $\mathcal{U} = \{\delta \in \mathbb{R}^{D} : \|\delta\|_\infty \le \varepsilon\}$.

Projected Gradient Descent (PGD)~\cite{madry2017towards} is a widely used first-order method for generating adversarial examples.
PGD approximately solves
\begin{equation}
  \max_{\lVert \delta \rVert_\infty \le \varepsilon} \; \mathcal{L}(\vx+\delta, y),
  \label{eq:pgd_obj}
\end{equation}
where $\mathcal{L}$ is a differentiable surrogate loss that measures attack success.
Starting from an initialization $\vx^{(0)}$, PGD iteratively takes a step in the direction of steepest ascent and projects the result onto the feasible set:
\begin{equation}
  \vx^{(t+1)} \;=\; \Pi_{\mathcal{B}_\infty(\vx,\varepsilon)}\!\Bigl(\vx^{(t)} + \alpha\, \mathrm{sign}\bigl(\nabla_\vx \mathcal{L}(\vx^{(t)},y)\bigr)\Bigr),
  \label{eq:pgd_update}
\end{equation}
where $\alpha$ is the step size and $\Pi_{\mathcal{B}_\infty(\vx,\varepsilon)}$ denotes projection onto the $\ell_\infty$ ball around $\vx$.

A natural choice for the surrogate loss $\mathcal{L}$ in semantic segmentation attacks is to use the sum or mean per-pixel cross-entropy, mirroring the standard training objective:
\begin{equation*}
  \mathcal{L}_{\mathrm{CE}}(\vx,y)
  \;=\;
  \frac{1}{N}\sum_{i=1}^{N} \ell_{\mathrm{CE}}\bigl(p_{y_i,i}(\vx)\bigr),
  \qquad
  \ell_{\mathrm{CE}}(p) = -\log p.
\end{equation*}

However, standard cross-entropy has several limitations when used as the attack loss for semantic segmentation, often leading to suboptimal optimization dynamics:
\paragraph{Objective monotonically increasing as $p_y \to 0$}
The per-pixel loss $\ell_{\mathrm{CE}}(p_{y_i,i})=-\log p_{y_i,i}$ increases monotonically as $p_{y_i,i} \to 0$ and is unbounded.
As a result, the average or mean loss can be dominated by a small subset of pixels with extremely small $p_{y_i,i}$, which are already misclassified, so improving the objective may largely reflect further decreasing $p_y$ on these pixels rather than improving the overall attack success.

\paragraph{Gradient monotonically increasing as $p_y \to 0$}
The per-pixel cross-entropy loss also leads to gradients that are highly concentrated on pixels with low predicted probability for the true class.
For the cross-entropy loss, its gradient with respect to logits is bounded as follows~(see Appendix~A~\cite{croce2025towards}):
\begin{equation}
\frac{K}{K-1}\,(1-p_y)^2
\;\le\;
\bigl\|\nabla_u \mathcal{L}_{\mathrm{CE}}(u,e_y)\bigr\|_2^2
\;\le\;
1 - p_y + (1-p_y)^2.
\end{equation}
Thus, as $p_y \to 0$, both the lower and upper bounds on gradients are monotonically increasing, resulting in overemphasis on already misclassified pixels and away from high-confidence correctly classified regions.

Due to these effects, PGD with cross-entropy loss struggles to change high-confidence pixels and overemphasizes low-confidence pixels, leading to suboptimal attack performance.
In the next section, we introduce a Tsallis cross-entropy objective, which is bounded and whose gradient peak can be adaptively controlled, enabling us to steer the attack and improve optimization dynamics.

\section{Gradient Rebalancing with Tsallis Cross-Entropy}\label{sec:tsallis-cross-entropy}

This section introduces Tsallis cross-entropy, a one-parameter generalization of standard cross-entropy controlled by a parameter $q$, known as an entropic index.
We show that Tsallis cross-entropy induces a confidence-dependent, per-pixel reweighting of the standard cross-entropy gradient, thereby rebalancing how optimization effort is distributed across pixels.
We analyze how the Tsallis parameter $q$ shapes the gradient.
We derive a closed-form expression for the gradient peak location, which provides an intuitive rule for selecting $q$ to target pixels within a desired confidence regime.
This improved control leads to more effective PGD optimization and higher attack success rates for semantic segmentation.

Let $p_{y_i,i} \in (0,1]$ denote the predicted probability of the ground-truth class $y_i$ at a pixel $i$.
For $q \in \mathbb{R}$ with $q\neq 1$, we define the Tsallis cross-entropy loss as
\begin{equation}
	\mathcal{L}_{q}(p_{y_i,i})
	\;=\;
	\frac{1 - p_{y_i,i}^{1-q}}{1-q}.
	\label{eq:tsallis_ce_def}
\end{equation}
Tsallis cross-entropy is bounded between $0$ and $\frac{1}{1-q}$ for all $q < 1$.
It reduces to the standard cross-entropy in the limit $q\to 1$.
For semantic segmentation attacks, we use the average of the per-pixel Tsallis losses.

\begin{proposition}[Tsallis cross-entropy is a confidence-weighted cross-entropy]
	The gradient of Tsallis cross-entropy with respect to $p$ is a confidence-weighted version of the standard cross-entropy gradient:
	\begin{equation}
		\frac{\partial \mathcal{L}_q(p)}{\partial p}
		\;=\;
		p^{1-q}\,\frac{\partial \mathcal{L}_{\mathrm{CE}}(p)}{\partial p}.
		\label{eq:tsallis_confidence_weighting}
	\end{equation}
\end{proposition}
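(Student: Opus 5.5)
The plan is to differentiate both losses directly in $p$ on the open interval $p\in(0,1]$ and compare the results. Fix $q\neq 1$ and regard $\mathcal{L}_q$ from \eqref{eq:tsallis_ce_def} as a function of the scalar $p$ alone, dropping the pixel index. Since $p>0$, the map $p\mapsto p^{1-q}=\exp((1-q)\log p)$ is smooth for every real exponent. Its derivative is $(1-q)p^{-q}$, and this holds equally for $q<1$ and $q>1$, so no case split on $q$ is needed.

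First, differentiate the Tsallis loss. The constant term $1/(1-q)$ drops out, and the factor $(1-q)$ coming from the power rule cancels the denominator:
\begin{equation*}
\frac{\partial \mathcal{L}_q(p)}{\partial p} = -\frac{(1-q)\,p^{-q}}{1-q} = -p^{-q}.
\end{equation*}

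Second, recall that $\ell_{\mathrm{CE}}(p)=-\log p$, which gives $\partial \mathcal{L}_{\mathrm{CE}}/\partial p=-1/p=-p^{-1}$.

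Third, factor the Tsallis derivative as $-p^{-q}=p^{1-q}\cdot(-p^{-1})$. This is exactly \eqref{eq:tsallis_confidence_weighting}, with confidence weight $p^{1-q}$.

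As a sanity check, I would note that letting $q\to1$ sends the weight to $1$, which recovers cross-entropy and is consistent with the stated limit.

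No step here is a real obstacle; the argument is a one-line calculus identity. The only point that needs care is making sure the exponent rule is valid, which requires $p>0$ strictly. This is why the domain is stated as $(0,1]$ rather than $[0,1]$.

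If the statement were to be read as a gradient with respect to logits or the input, I would extend it by the chain rule. The factor $p^{1-q}$ is a scalar per pixel, so it passes through $\partial p/\partial u$ unchanged. The same weighting then holds for $\nabla_u$ and $\nabla_{\vx}$ pixelwise.
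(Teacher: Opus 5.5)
Your proof is correct and essentially identical to the paper's: compute $\partial\mathcal{L}_q/\partial p=-p^{-q}$, compute $\partial\mathcal{L}_{\mathrm{CE}}/\partial p=-1/p$, and factor $-p^{-q}=p^{1-q}\cdot(-1/p)$. Your added remarks are sound: $p>0$ is needed, and the scalar weight $p_y^{1-q}$ carries through the chain rule to the logits, which the paper uses implicitly for its logit-gradient bounds. One small wording fix: $(0,1]$ is half-open, not open.
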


\begin{proof}
	The gradient of the Tsallis loss $\mathcal{L}_q(p)$ from \eqref{eq:tsallis_ce_def} with respect to $p$ is computed as
	\begin{equation}
		\frac{\partial \mathcal{L}_q(p)}{\partial p}
		\;=\;
		-p^{-q}.
	\end{equation}
	On the other hand, the gradient of standard cross-entropy is
	\begin{equation}
		\frac{\partial \mathcal{L}_{\mathrm{CE}}(p)}{\partial p}
		\;=\;
		-\frac{1}{p}.
	\end{equation}
	Combining the two expressions gives
	\begin{equation}
		\frac{\partial \mathcal{L}_q}{\partial p}
		\;=\;
		-p^{-q}
		\;=\;
		p^{1-q}\left(-\frac{1}{p}\right)
		\;=\;
		p^{1-q}\,\frac{\partial \mathcal{L}_{\mathrm{CE}}(p)}{\partial p},
	\end{equation}
	which completes the proof of~\eqref{eq:tsallis_confidence_weighting}.
\end{proof}

When $q<1$, this weight increases with $p$ and therefore emphasizes high-confidence pixels; when $q>1$, it emphasizes low-confidence pixels.
For untargeted semantic segmentation attacks, we consider values of $q \in (-\infty,1)$ to focus optimization on high-confidence pixels that are difficult to flip.

The next proposition characterizes how Tsallis cross-entropy reshapes the gradient magnitude with respect to logits.

\begin{proposition}[Bounds for Tsallis Cross-Entropy Gradient]
	Let $u\in\mathbb{R}^K$ be the per-pixel logits, $p=\softmax(u)$, and let $y$ be the ground-truth class with probability $p_y\in(0,1]$.
	The gradient of Tsallis cross-entropy (per pixel) satisfies
	\begin{equation}
		\label{eq:tsallis_logit_grad_bound}
		\begin{split}
			\frac{K}{K-1}\,p_y^{2(1-q)}(1-p_y)^2
			 & \le \bigl\|\nabla_u \mathcal{L}_q(u,y)\bigr\|_2^2 \\
			 & \le p_y^{2(1-q)}\bigl(1-p_y+(1-p_y)^2\bigr).
		\end{split}
	\end{equation}
\end{proposition}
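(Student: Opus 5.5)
The plan is to reduce the statement to the cross-entropy bound quoted in Section~\ref{sec:adversarial-attack} (Appendix~A of~\cite{croce2025towards}) by way of the chain rule and the first proposition. Regard $\mathcal{L}_q$ and $\mathcal{L}_{\mathrm{CE}}$ as functions of the logits $u$ only through the scalar $p_y(u)=\softmax(u)_y$. Then
\begin{equation*}
\nabla_u \mathcal{L}_q(u,y) = \frac{\partial \mathcal{L}_q}{\partial p_y}\,\nabla_u p_y
= p_y^{1-q}\,\frac{\partial \mathcal{L}_{\mathrm{CE}}}{\partial p_y}\,\nabla_u p_y
= p_y^{1-q}\,\nabla_u \mathcal{L}_{\mathrm{CE}}(u,e_y),
\end{equation*}
where the middle equality is exactly the confidence-weighting identity of the preceding proposition. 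Since $p_y^{1-q}>0$ is a scalar, taking squared norms gives
\begin{equation*}
\|\nabla_u \mathcal{L}_q\|_2^2 = p_y^{2(1-q)}\,\|\nabla_u \mathcal{L}_{\mathrm{CE}}\|_2^2 .
\end{equation*}
Multiplying both sides of the cross-entropy bound by the nonnegative factor $p_y^{2(1-q)}$ then yields both inequalities in~\eqref{eq:tsallis_logit_grad_bound} at once.

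To be safe, I would also make the cross-entropy bound self-contained rather than only cite it. Write $\nabla_u \mathcal{L}_{\mathrm{CE}} = p - e_y$, so that
\begin{equation*}
\|p-e_y\|_2^2=(1-p_y)^2+\sum_{k\neq y}p_k^2 .
\end{equation*}
For the upper bound, use $\sum_{k\neq y}p_k^2\le\sum_{k\neq y}p_k=1-p_y$, which holds because $0\le p_k\le 1$. For the lower bound, apply Cauchy--Schwarz: $\sum_{k\neq y}p_k^2\ge (1-p_y)^2/(K-1)$. This gives $(1-p_y)^2\bigl(1+\tfrac{1}{K-1}\bigr)=\tfrac{K}{K-1}(1-p_y)^2$.

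The only real subtlety, and the step I expect to be the main obstacle, is justifying the chain-rule factorization cleanly. Both losses depend on $u$ only via $p_y$, and $p_y\in(0,1]$ is strictly positive for finite logits, so $p_y^{-q}$ and $p_y^{1-q}$ are well defined and differentiable even when $q<0$. The edge case $p_y=1$ cannot occur for finite $u$. If one allows it as a limit, both sides of~\eqref{eq:tsallis_logit_grad_bound} vanish, so the statement still holds trivially. No case split on the sign of $1-q$ is needed, because the weight enters only through its square, which is nonnegative.
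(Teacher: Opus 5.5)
Your proof is correct and follows the paper's route. The paper simply says the bound follows by combining the confidence-weighting identity $\partial \mathcal{L}_q/\partial p = p^{1-q}\,\partial\mathcal{L}_{\mathrm{CE}}/\partial p$ with the cross-entropy gradient bounds, which is exactly your chain-rule factorization $\nabla_u\mathcal{L}_q = p_y^{1-q}(p-e_y)$ followed by scaling those bounds by $p_y^{2(1-q)}$. Your self-contained derivation of the cross-entropy bounds, via $\sum_{k\neq y}p_k^2\le 1-p_y$ and Cauchy--Schwarz, is also correct and supplies what the paper only cites from Croce et al.~\cite{croce2025towards}.
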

\Eqref{eq:tsallis_logit_grad_bound} can be derived by combining \eqref{eq:tsallis_confidence_weighting} with the standard cross-entropy gradient bounds.
For $q<1$, the factor $p_y^{2(1-q)}\le 1$ implies that the Tsallis gradient uniformly down-weights low-confidence pixels with $p_y \approx 0$, preventing gradients from concentrating on pixels that are already confidently misclassified.

\begin{proposition}[Tsallis Cross-Entropy Gradient Peak]
	For Tsallis cross-entropy with $q < 1$, the lower-bound proxy of the squared $\ell_2$-norm of the logit-gradient in~\eqref{eq:tsallis_logit_grad_bound} admits a unique maximum over $p_y\in(0,1)$ at
	\[
		p_y^\star \;=\; \frac{1-q}{2-q}.
	\]
\end{proposition}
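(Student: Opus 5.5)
We work directly with the lower-bound proxy from the preceding bound proposition, dropping the positive constant $\frac{K}{K-1}$ since it does not affect where the maximum lies. Set
\[
g(p) \;=\; p^{2(1-q)}(1-p)^2, \qquad p\in(0,1),
\]
and note that $g(p) = h(p)^2$ with $h(p) = p^{1-q}(1-p) > 0$ on $(0,1)$. Squaring is strictly increasing on $[0,\infty)$, so maximizing $g$ is equivalent to maximizing $h$. It is also equivalent to maximizing $\phi(p)=\log h(p) = (1-q)\log p + \log(1-p)$, which is the form we will use.

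The key computation is
\[
\phi'(p) \;=\; \frac{1-q}{p} - \frac{1}{1-p}.
\]
Setting $\phi'(p)=0$ gives $(1-q)(1-p) = p$, hence $p^\star = \frac{1-q}{2-q}$. Because $q<1$, we have $1-q>0$ and $2-q>1-q$, so $p^\star\in(0,1)$ is admissible.

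Uniqueness and the fact that this point is a maximum follow from strict concavity:
\[
\phi''(p) = -\frac{1-q}{p^2} - \frac{1}{(1-p)^2} < 0 \quad \text{on } (0,1).
\]
A strictly concave function has at most one critical point, and any critical point is its global maximizer. As a boundary check, $g(p)\to 0$ as $p\to 0^+$ (since $2(1-q)>0$) and as $p\to 1^-$, while $g>0$ in the interior, so the supremum is attained in the interior. Alternatively, one can observe that $\phi'$ is positive to the left of $p^\star$ and negative to the right, which gives the same conclusion.

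There is no real obstacle here. The only point needing care is that the hypothesis $q<1$ is used twice: to keep $p^\star$ inside $(0,1)$, and to make $g$ vanish at $0$ and $\phi$ strictly concave. It is also worth stressing that the statement concerns the lower-bound proxy, not the true gradient norm, so the argument never needs the exact softmax gradient.
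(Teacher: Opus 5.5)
Your proposal is correct and follows essentially the same route as the paper: drop the constant $\frac{K}{K-1}$, maximize the logarithm of $p^{2(1-q)}(1-p)^2$, and solve the first-order condition to get $p^\star=\frac{1-q}{2-q}$. You also supply what the paper's proof leaves implicit: the strict concavity of $\phi$, which together with $p^\star\in(0,1)$ justifies the claimed uniqueness and that the critical point is a maximum.
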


\begin{proof}
	For simplicity, we analyze the lower bound in \eqref{eq:tsallis_logit_grad_bound}.
	Since constant factors do not affect the maximizer, it suffices to maximize $f(p)=p^{2(1-q)}(1-p)^2$ over $p\in(0,1)$.
	Maximizing $\log f(p)=2(1-q)\log p + 2\log(1-p)$ yields the following first-order condition:
	\[
		\frac{d}{dp}\log f(p)=\frac{2(1 - q)}{p}-\frac{2}{1-p}=0
		\, \Longrightarrow \,
		p^\star=\frac{1 - q}{2 -q}.
	\]
\end{proof}

The expression for $p_y^\star$ provides an intuitive mechanism for controlling gradient concentration.
Smaller values of $q$ shift the gradient peak toward high-confidence, correctly classified pixels, while increasing $q$ moves the peak toward lower-confidence regions, recovering standard cross-entropy behavior as $q\to 1$.
Figure~\ref{fig:gradient_weighting_comparison} compares the per-pixel confidence-dependent gradients induced by Tsallis cross-entropy with those of standard cross-entropy, cross-entropy combined with cosine similarity~\cite{agnihotri2024cospgd}, and Jensen–Shannon divergence~\cite{croce2025towards}.
Unlike prior approaches, Tsallis cross-entropy allows the location of the gradient peak to be explicitly controlled through the parameter $q$, thereby enabling precise control over which pixels the attack focuses on.
How $q$ should be chosen in practice is an empirical question.
In Section~\ref{sec:fixed_q_study}, we show that the best fixed-$q$ varies across datasets, architectures, and threat budgets.
In Section~\ref{sec:schedule_selection}, based on this observation, we propose a dynamic $q$-schedule and select the best schedule on a held-out validation split.
Our schedule starts with $q < 0$ to emphasize high-confidence, correctly classified pixels in early iterations, and gradually increases $q \to 1$, recovering standard cross-entropy by the end of optimization.

\begin{figure}[t]
	\centering
	\begin{tikzpicture}
\begin{axis}[
    width=\columnwidth,
    height=0.68\columnwidth,
    xmin=0, xmax=1,
    ymin=0, ymax=1.08,
    domain=0.001:0.999,
    samples=180,
    xlabel={$p_y$},
    ylabel={normalized gradient weight},
    axis lines=left,
    axis line style={thick,-stealth},
    tick style={thick},
    ticklabel style={font=\footnotesize},
    label style={font=\small},
    xtick={0,0.2,0.4,0.6,0.8,1},
    ytick={0,0.2,0.4,0.6,0.8,1},
    ymajorgrids,
    grid style={black!10},
    legend columns=2,
    legend style={
        draw=none,
        fill=none,
        font=\scriptsize,
        at={(0.5,-0.22)},
        anchor=north,
        cells={anchor=west},
        column sep=1.2em,
        row sep=0.2em
    },
    legend image code/.code={
        \draw[#1, line width=1.4pt] (0cm,0cm) -- (0.75cm,0cm);
    },
]

\addplot[
    orange!85!black,
    line width=1.4pt,
] {1 - x};
\addlegendentry{Cross-Entropy}

\addplot[
    magenta!70!black,
    dashed,
    line width=1.4pt,
] {sin(deg(3.141592653589793*x))};
\addlegendentry{CosPGD}

\addplot[
    green!60!black,
    dashdotted,
    line width=1.4pt,
] {(x^0.35)*(1-x)/0.46182052223806414};
\addlegendentry{Jensen--Shannon}

\addplot[
    blue!80!black,
    densely dotted,
    line width=1.4pt,
] {(x^1.5)*(1-x)/0.185903200617956};
\addlegendentry{Tsallis CE $q=-0.5$}

\addplot[
    blue!80!black,
    dashed,
    line width=1.4pt,
] {x*(1-x)/0.25};
\addlegendentry{Tsallis CE $q=0$}

\addplot[
    blue!45!black,
    dashdotted,
    line width=1.4pt,
] {sqrt(x)*(1-x)/0.38490017945975047};
\addlegendentry{Tsallis CE $q=0.5$}

\end{axis}
\end{tikzpicture}
	\caption{\textbf{Comparison of per-pixel gradient weightings.}
		Best viewed in color.
		We compare per-pixel gradient weightings for different attacks as a function of the predicted probability $p_y$ of the ground-truth class.
		Varying $q$ in Tsallis cross-entropy shifts the location of the gradient peak, enabling targeted optimization over pixels at different confidence levels.}
	\label{fig:gradient_weighting_comparison}
\end{figure}

\section{Experiments}
\label{sec:experiments}

\begin{table*}[t!]
	\caption{\textbf{Effect of $q$ on attack strength.}
		We evaluate TsallisPGD with fixed-$q$ on the strongest robust model for each dataset:
		PSPNet (DDCAT) on Cityscapes, UPerNet (ConvNeXt-T) on VOC, and Segmenter (ViT-S) on ADE20K.
		\emph{Best-of-$q$} reports the lowest value obtained by aggregating multiple fixed-$q$ variants using SEA~\cite{croce2025towards}, representing the strongest combination of fixed-$q$ attacks.
		All values are rounded to one decimal place; attacks tied at the rounded best value are all marked best.
		Lower values indicate stronger attacks.
		For each row and metric, the best fixed-$q$ result is shown in \textbf{bold}.}
	\label{tab:q_analysis}

	\centering
	\footnotesize
	\setlength{\tabcolsep}{4.5pt}
	\renewcommand{\arraystretch}{1.05}

	\resizebox{\textwidth}{!}{
		\begin{tabular}{c|cc|cc|cc|cc|cc|cc||cc}
			\multirow{2}{*}{$\epsilon_\infty$}
			         & \multicolumn{12}{c||}{Fixed-$q$}
			         & \multicolumn{2}{c}{Best-of-$q$}                                                                                                                                                                                       \\

			         & \multicolumn{2}{c|}{$q=-3$}
			         & \multicolumn{2}{c|}{$q=-2$}
			         & \multicolumn{2}{c|}{$q=-1$}
			         & \multicolumn{2}{c|}{$q=0$}
			         & \multicolumn{2}{c|}{$q=0.5$}
			         & \multicolumn{2}{c||}{$q=1$}
			         & \multicolumn{2}{c}{}                                                                                                                                                                                                  \\

			         & Acc                              & mIoU & Acc           & mIoU          & Acc           & mIoU          & Acc           & mIoU          & Acc           & mIoU          & Acc  & mIoU & Acc           & mIoU          \\
			\midrule

			\multicolumn{15}{l}{\textbf{\cityscapes:} PSPNet+ResNet-50 \textbf{(DDCAT~\cite{xu2021dynamic})} \quad (Acc 94.9, mIoU 64.7)}                                                                                                    \\
			\midrule
			0.25/255 & 63.6                             & 34.9 & \textbf{63.2} & 33.8          & 63.3          & 33.0          & 64.8          & \textbf{32.5} & 66.6          & 33.3          & 72.3 & 35.3 & \textbf{61.9} & \textbf{30.4} \\
			0.5/255  & 28.6                             & 16.1 & 28.1          & 15.7          & \textbf{27.8} & \textbf{14.7} & 28.9          & 14.9          & 31.2          & 15.3          & 43.0 & 21.0 & \textbf{24.6} & \textbf{12.4} \\
			1/255    & 5.4                              & 3.4  & 4.5           & 2.8           & \textbf{4.1}  & 2.4           & 4.2           & \textbf{2.0}  & 5.8           & 2.6           & 17.8 & 7.2  & \textbf{2.0}  & \textbf{1.1}  \\
			\midrule
			\midrule

			\multicolumn{15}{l}{\textbf{\voc:} UPerNet+ConvNeXt-T \textbf{(PIR-AT~\cite{croce2025towards})} \quad (Acc 92.7, mIoU 75.2)}                                                                                                     \\
			\midrule
			4/255    & 90.0                             & 67.7 & 89.7          & 67.0          & 89.3          & 66.1          & 88.9          & 65.2          & \textbf{88.8} & \textbf{64.9} & 89.0 & 65.5 & \textbf{88.6} & \textbf{64.4} \\
			8/255    & 76.4                             & 43.5 & 75.1          & 41.9          & 73.9          & 40.5          & \textbf{73.1} & \textbf{40.0} & 73.9          & 41.4          & 77.4 & 46.1 & \textbf{71.2} & \textbf{36.3} \\
			12/255   & 32.3                             & 10.6 & 31.6          & \textbf{10.2} & \textbf{31.5} & \textbf{10.2} & 34.2          & 12.1          & 38.5          & 15.0          & 52.4 & 22.9 & \textbf{27.2} & \textbf{7.3}  \\
			\midrule
			\midrule

			\multicolumn{15}{l}{\textbf{\ade:} Segmenter+ViT-S \textbf{(PIR-AT~\cite{croce2025towards})} \quad (Acc 69.1, mIoU 28.6)}                                                                                                        \\
			\midrule
			4/255    & 59.4                             & 20.3 & 58.5          & 19.6          & 57.4          & 18.5          & 56.1          & 17.3          & \textbf{55.8} & \textbf{16.8} & 57.1 & 17.3 & \textbf{55.2} & \textbf{15.9} \\
			8/255    & 40.3                             & 10.9 & 38.5          & 10.0          & 36.2          & 9.0           & \textbf{34.5} & 8.0           & 34.9          & \textbf{7.8}  & 39.9 & 9.3  & \textbf{32.9} & \textbf{6.5}  \\
			12/255   & 11.7                             & 2.8  & 10.8          & 2.4           & \textbf{9.8}  & 2.2           & 10.2          & \textbf{2.1}  & 11.6          & 2.3           & 20.3 & 4.0  & \textbf{8.3}  & \textbf{1.4}  \\

			\bottomrule
		\end{tabular}
	}
\end{table*}

We evaluate and compare \method{} against state-of-the-art segmentation attacks.
Our evaluation proceeds in three stages to motivate and validate the dynamic $q$-schedule.
First, in Section~\ref{sec:fixed_q_study}, we study \emph{fixed-$q$} and show that no single value is universally optimal: the best $q$ depends on the dataset, architecture, and threat budget.
This motivates a dynamic schedule that sweeps $q$ within a single attack.
In Section~\ref{sec:schedule_selection}, we define a linear $q$-schedule parameterized by $(q_{\text{start}}, q_{\text{end}})$ and select these parameters using a held-out validation split, disjoint from the test set used in the final comparison.
Finally, in Section~\ref{sec:main_comparison}, we fix this validation-selected schedule and evaluate \method{} against state-of-the-art segmentation attacks on the test set.

\noindent \textbf{Datasets:}
We conduct experiments on \voc~\cite{everingham2010pascal}, \ade~\cite{zhou2019ade20k}, and \cityscapes~\cite{cordts2016cityscapes}, following standard protocols and evaluating on the central crop~\cite{croce2025towards}.

\noindent \textbf{Models:}
We evaluate diverse segmentation architectures, both standard and adversarially trained: PSPNet~\cite{zhao2017pyramid} (ResNet-50 backbone) with clean and DDCAT-robust weights, UPerNet~\cite{xiao2018unified} with a ConvNeXt-T backbone~\cite{liu2022convnet}, and Segmenter~\cite{strudel2021segmenter} with a ViT-S backbone~\cite{dosovitskiy2021image}.

\noindent \textbf{Threat model and metrics:}
We consider the $\ell_\infty$ threat model with perturbation radius $\epsilon_\infty$ (reported in the tables).
We report (i) average pixel accuracy (\acc) and (ii) mean intersection-over-union (\miou), where lower values indicate stronger attacks.

\noindent \textbf{Baselines and protocol:}
We compare \method{} against strong PGD-based objectives for segmentation: CEPGD (pixel-wise cross-entropy), SegPGD~\cite{gu2022segpgd}, CosPGD~\cite{agnihotri2024cospgd}, JSPGD~\cite{croce2025towards} (Jensen--Shannon divergence PGD), and MaskedPGD~\cite{croce2025towards} (masked cross-entropy PGD).
All attacks, including TsallisPGD, use APGD~\cite{croce2020reliable} for step-size selection, a fixed budget of $T{=}300$ iterations, a single random start, and the multi-$\varepsilon$ trick from~\cite{croce2025towards}.
This ensures that performance differences are attributable to the attack objective.

\subsection{\texorpdfstring{Analysis of Fixed-$q$ TsallisPGD}{Analysis of Fixed-q TsallisPGD}}\label{sec:fixed_q_study}

The Tsallis objective (Section~\ref{sec:tsallis-cross-entropy}) introduces a continuous parameter $q$ that controls the concentration of per-pixel gradients.
A natural question is whether a single fixed value of $q$ is sufficient, or whether stronger attacks require varying $q$ during optimization.
To study this, we evaluate \method{} with constant $q \in \{-3,-2,-1,0,0.5,1\}$, which correspond to gradient peak locations ranging from $0.8$ to $0.0$, on the strongest robust model for each dataset: PSPNet (DDCAT~\cite{xu2021dynamic}) on Cityscapes, UPerNet (ConvNeXt-T, PIR-AT~\cite{croce2025towards}) on Pascal VOC, and Segmenter (ViT-S, PIR-AT~\cite{croce2025towards}) on ADE20K.
Here, $q=1$ recovers the standard cross-entropy objective.
We also report the \emph{Best-of-$q$} result obtained by aggregating several fixed-$q$ variants using SEA~\cite{croce2025towards}, which represents the best combination of multiple fixed-$q$ attacks.

Table~\ref{tab:q_analysis} shows that no single fixed value of $q$ is optimal across datasets, models, perturbation budgets, or even evaluation metrics.
For example, on Cityscapes, the lowest accuracy is achieved by $q=-2$ at $\epsilon_\infty=0.25/255$, but by $q=-1$ at $0.5/255$ and $1/255$.
On the other hand, the best mIoU is obtained with $q=0$ at $0.25/255$ and $1/255$, and with $q=-1$ at $0.5/255$.
The table also shows that standard CEPGD ($q=1$) is consistently suboptimal.
On VOC at $12/255$, accuracy drops from $52.4$ with $q=1$ to $31.5$ with the best fixed choice of $q$.
These results confirm that replacing standard cross-entropy with Tsallis cross-entropy, and tuning $q$, can substantially strengthen the attack.

Finally, the \emph{Best-of-$q$} baseline is stronger than the best single fixed-$q$ attack in every row, and often yields substantially stronger results.
This suggests that different values of $q$ expose different vulnerable subsets of pixels, so no single fixed choice can capture all of them.
This directly motivates a \emph{linear} dynamic $q$-schedule: by sweeping $q$ during optimization, a single attack run can traverse multiple gradient-concentration regimes and target pixels at different confidence levels.
In the next subsection, we describe how we choose the endpoints of this linear schedule, $(q_{\text{start}}, q_{\text{end}})$.

\subsection{\texorpdfstring{Selection of the Dynamic $q$-Schedule}{Selection of the Dynamic q-Schedule}}\label{sec:schedule_selection}

The observation that the best fixed value of $q$ varies across settings motivates sweeping $q$ during optimization, allowing the attack to focus on pixels at different confidence levels throughout adversarial generation.
This removes the need to tune $q$ separately for each dataset, model, and perturbation budget and, when chosen appropriately, can approach the performance of the best combination of fixed-$q$ attacks.

We consider a linear schedule parameterized by the range $[q_{\text{start}}, q_{\text{end}}]$.
Specifically, we evaluate schedules with $q_{\text{start}} \in \{-3,-2,-1\}$ and $q_{\text{end}} \in \{0.5,1.0\}$, using linear interpolation, for a total of six candidate schedules.
Each candidate is evaluated on the \emph{validation} split of the corresponding dataset, following the protocol in Section~\ref{sec:fixed_q_study}.
For each schedule, we compute its rank on every validation row and then average these ranks across all rows.
The candidate with the best average validation rank is the linear sweep of $q$ over $[-2,1]$.
We therefore select this schedule as the \method{} schedule and use it unchanged for the final benchmark in Section~\ref{sec:main_comparison} across all datasets, architectures, and perturbation budgets.

\subsection{Comparison with State-of-the-Art Attacks}\label{sec:main_comparison}

\begin{table*}[!th]
	\caption{\textbf{Comparison of attacks.}
		We evaluate all attacks for 300 iterations on clean and robust models trained on \voc, \ade, and \cityscapes.
		\method{} uses the dynamic $q$-schedule selected in Section~\ref{sec:schedule_selection}, applied identically across all rows without per-row tuning.
		We report average pixel accuracy (Acc) and mean IoU (mIoU), where lower values indicate stronger attacks.
		Clean performance is shown next to each model.
		Attacks are ranked independently for each row and metric, with lower ranks indicating stronger attacks; \textbf{Avg. Rank} reports the mean rank across all rows, datasets, models, and $\epsilon_\infty$ budgets.
		For each metric, the best result is shown in \textbf{bold} and the second best is \underline{underlined}.}
	\label{tab:main_results}

	\centering
	\footnotesize
	\setlength{\tabcolsep}{3.65pt}
	\renewcommand{\arraystretch}{1.05}

	\resizebox{\textwidth}{!}{
		\begin{tabular}{c|cc|cc|cc|cc|cc||cc}
			\multirow{2}{*}{$\epsilon_\infty$}
			                   & \multicolumn{10}{c||}{Previous SOTA attacks}
			                   & \multicolumn{2}{c}{Our attack}                                                                                                                                                                                                                    \\

			                   & \multicolumn{2}{c|}{CEPGD}
			                   & \multicolumn{2}{c|}{SegPGD}
			                   & \multicolumn{2}{c|}{CosPGD}
			                   & \multicolumn{2}{c|}{JSPGD}
			                   & \multicolumn{2}{c||}{MaskedPGD}
			                   & \multicolumn{2}{c}{TsallisPGD}                                                                                                                                                                                                                    \\

			                   & Acc                                          & mIoU & Acc              & mIoU             & Acc              & mIoU             & Acc              & mIoU             & Acc              & mIoU             & Acc              & mIoU             \\
			\midrule

			\multicolumn{13}{l}{\textbf{\cityscapes:} PSPNet+ResNet-50 \textbf{(clean training)} \quad (Acc 95.3, mIoU 68.6)}                                                                                                                                                      \\
			\midrule
			0.25/255           & 22.1                                         & 12.9 & 14.1             & 9.2              & 7.9              & 5.8              & 9.0              & 6.5              & \textbf{6.5}     & \textbf{5.0}     & \underline{7.2}  & \underline{5.6}  \\
			0.5/255            & 5.3                                          & 1.5  & 1.0              & 0.3              & \underline{0.5}  & \underline{0.2}  & 0.7              & \underline{0.2}  & \textbf{0.2}     & \textbf{0.1}     & \textbf{0.2}     & \textbf{0.1}     \\
			1/255              & 3.9                                          & 1.0  & 0.5              & 0.3              & 0.2              & \underline{0.1}  & \underline{0.1}  & \underline{0.1}  & \textbf{0.0}     & \textbf{0.0}     & \textbf{0.0}     & \textbf{0.0}     \\
			\midrule

			\multicolumn{13}{l}{\textbf{\cityscapes:} PSPNet+ResNet-50 \textbf{(DDCAT~\cite{xu2021dynamic})} \quad (Acc 94.9, mIoU 64.7)}                                                                                                                                          \\
			\midrule
			0.25/255           & 72.0                                         & 34.9 & 66.9             & 34.1             & 64.7             & \textbf{32.1}    & 66.0             & \underline{32.4} & \underline{63.7} & 32.8             & \textbf{62.9}    & 32.8             \\
			0.5/255            & 42.2                                         & 20.0 & 36.2             & 18.3             & \underline{27.1} & \underline{14.0} & 29.2             & 14.3             & \textbf{26.2}    & \textbf{13.9}    & 27.3             & 14.1             \\
			1/255              & 16.2                                         & 6.4  & 12.4             & 5.8              & \underline{3.0}  & \textbf{1.5}     & 3.8              & 1.8              & \textbf{2.7}     & \underline{1.6}  & 4.1              & 2.1              \\
			\midrule

			\multicolumn{13}{l}{\textbf{\voc:} PSPNet+ResNet-50 \textbf{(clean training)} \quad (Acc 91.9, mIoU 85.9)}                                                                                                                                                             \\
			\midrule
			0.25/255           & 48.1                                         & 22.1 & 36.5             & 14.4             & 33.7             & 13.9             & 36.5             & 16.1             & \underline{32.5} & \underline{13.1} & \textbf{32.4}    & \textbf{12.9}    \\
			0.5/255            & 23.1                                         & 11.2 & 8.3              & 3.3              & 5.6              & 2.7              & 7.7              & 4.2              & \textbf{4.2}     & \textbf{1.5}     & \underline{4.3}  & \underline{1.7}  \\
			1/255              & 13.2                                         & 7.3  & 1.1              & 0.9              & \underline{0.2}  & \underline{0.3}  & 0.7              & 0.8              & \textbf{0.1}     & \textbf{0.1}     & \textbf{0.1}     & \textbf{0.1}     \\
			\midrule

			\multicolumn{13}{l}{\textbf{\voc:} PSPNet+ResNet-50 \textbf{(DDCAT~\cite{xu2021dynamic})} \quad (Acc 94.1, mIoU 76.9)}                                                                                                                                                 \\
			\midrule
			0.25/255           & 72.3                                         & 36.8 & 68.3             & 33.8             & \underline{67.3} & \textbf{33.1}    & 68.2             & 33.6             & \textbf{67.0}    & 33.4             & \textbf{67.0}    & \underline{33.2} \\
			0.5/255            & 46.5                                         & 18.0 & 33.8             & 12.5             & 30.3             & 11.6             & 33.1             & 12.9             & \textbf{27.7}    & \underline{10.6} & \underline{28.1} & \textbf{10.3}    \\
			1/255              & 22.7                                         & 8.3  & 10.1             & 3.8              & 3.1              & \underline{1.5}  & 5.0              & 2.8              & \underline{2.4}  & \textbf{1.1}     & \textbf{2.3}     & \textbf{1.1}     \\
			\midrule

			\multicolumn{13}{l}{\textbf{\voc:} UPerNet+ConvNeXt-T \textbf{(PIR-AT~\cite{croce2025towards})} \quad (Acc 92.7, mIoU 75.2)}                                                                                                                                           \\
			\midrule
			4/255              & 89.0                                         & 65.5 & \textbf{88.7}    & \textbf{64.7}    & \underline{88.8} & 65.0             & \textbf{88.7}    & \underline{64.8} & 89.2             & 65.8             & \textbf{88.7}    & \textbf{64.7}    \\
			8/255              & 77.2                                         & 45.8 & \underline{73.0} & \underline{39.1} & 73.1             & 40.1             & 74.0             & 41.5             & 73.8             & 40.4             & \textbf{72.8}    & \textbf{39.0}    \\
			12/255             & 53.1                                         & 23.2 & 35.9             & 11.6             & 34.6             & 12.6             & 38.4             & 15.2             & \underline{31.5} & \underline{10.1} & \textbf{30.7}    & \textbf{9.7}     \\
			\midrule

			\multicolumn{13}{l}{\textbf{\ade:} UPerNet+ConvNeXt-T \textbf{(PIR-AT~\cite{croce2025towards})} \quad (Acc 70.5, mIoU 31.8)}                                                                                                                                           \\
			\midrule
			4/255              & 57.5                                         & 19.8 & \underline{55.9} & \textbf{18.9}    & \underline{55.9} & \underline{19.0} & \underline{55.9} & \underline{19.0} & 57.0             & 20.0             & \textbf{55.8}    & \textbf{18.9}    \\
			8/255              & 35.2                                         & 8.9  & 28.4             & 7.2              & \underline{27.6} & \textbf{7.0}     & 28.4             & 7.3              & 28.5             & 7.5              & \textbf{27.5}    & \underline{7.1}  \\
			12/255             & 13.2                                         & 2.5  & 5.3              & 1.2              & \underline{3.9}  & \underline{1.0}  & 4.8              & 1.2              & \textbf{3.4}     & \textbf{0.8}     & \textbf{3.4}     & \textbf{0.8}     \\
			\midrule

			\multicolumn{13}{l}{\textbf{\ade:} Segmenter+ViT-S \textbf{(PIR-AT~\cite{croce2025towards})} \quad (Acc 69.1, mIoU 28.6)}                                                                                                                                              \\
			\midrule
			4/255              & 57.0                                         & 17.3 & \underline{55.5} & \textbf{16.5}    & 55.7             & 16.8             & 55.6             & \underline{16.6} & 56.8             & 17.8             & \textbf{55.4}    & \textbf{16.5}    \\
			8/255              & 39.8                                         & 9.4  & \underline{34.3} & \textbf{7.5}     & \textbf{34.0}    & \underline{7.6}  & 34.4             & 7.7              & 36.0             & 8.5              & \textbf{34.0}    & \underline{7.6}  \\
			12/255             & 19.8                                         & 3.9  & 10.4             & \textbf{2.0}     & \underline{9.6}  & \textbf{2.0}     & 10.7             & \underline{2.1}  & 9.8              & \underline{2.1}  & \textbf{9.1}     & \textbf{2.0}     \\

			\midrule
			\addlinespace[1mm]
			\textbf{Avg. Rank} & 5.95                                         & 5.86 & 4.05             & 3.52             & 2.81             & \underline{2.57} & 3.71             & 3.86             & \underline{2.43} & 2.76             & \textbf{1.38}    & \textbf{1.57}    \\
			\bottomrule
		\end{tabular}
	}
\end{table*}

With the $q$-schedule fixed by the selection procedure in Section~\ref{sec:schedule_selection}, we compare \method{} against state-of-the-art segmentation attacks on the full test benchmark.
The same schedule is used unchanged across all datasets, architectures, and perturbation budgets, with no per-row tuning.

\noindent \textbf{Results:}
Table~\ref{tab:main_results} shows that \method{} is the strongest attack on average, achieving the best average rank across all 21 evaluation settings: $1.38$ in \acc{} and $1.57$ in \miou{}.
For \acc{}, this improves over the strongest baseline, MaskedPGD ($2.43$), while for \miou{} it outperforms CosPGD ($2.57$), establishing consistent gains across both metrics.
At the per-setting level, \method{} attains the best performance in $16/21$ settings for \acc{} and $13/21$ settings for \miou{} (counting ties as best), indicating that its advantage is not limited to a small subset of configurations but holds broadly.

The gains are most pronounced on adversarially trained models, where optimization is typically more challenging.
On VOC with UPerNet (ConvNeXt-T, PIR-AT), \method{} achieves the best \acc{} at all three perturbation budgets and is best (or tied) in \miou{} across all budgets.
On ADE20K with UPerNet (ConvNeXt-T, PIR-AT), it is again best in \acc{} at all budgets and best (or tied) in \miou{} at $4/255$ and $12/255$, while at $8/255$ it is narrowly second to CosPGD ($7.1$ vs.\ $7.0$).
Similarly, on ADE20K with Segmenter (ViT-S, PIR-AT), \method{} is strongest at $4/255$ and $12/255$ on both metrics; at $8/255$ it achieves the best \acc{} but is marginally second in \miou{}, where SegPGD performs best.
On clean models, the results are more mixed but remain favorable: \method{} is competitive across Cityscapes and VOC and achieves several clear wins, although MaskedPGD occasionally matches or slightly exceeds its performance at larger perturbation budgets.
This behavior suggests that Tsallis-based reweighting is especially effective in more challenging robustness-evaluation regimes against adversarially trained models, where controlling the gradient focus across pixels becomes central to attack optimization.
Overall, the results indicate that a single validation-selected dynamic Tsallis schedule generalizes across datasets, architectures, and perturbation budgets, outperforming prior state-of-the-art segmentation attacks.
A natural direction for future work is to make the choice of \(q\) adaptive, for example by selecting \(q\) automatically at each attack iteration rather than relying on a fixed, predetermined schedule.

\section{Conclusion}
\label{sec:conclusion}

We analyze the limitations of standard cross-entropy as an attack objective for semantic segmentation and introduce \textbf{TsallisPGD}, which replaces pixel-wise cross-entropy with Tsallis cross-entropy.
We show that the parameter $q$ controls which confidence levels the attack emphasizes, allowing it to target different subsets of pixels during optimization.
Because the best choice of $q$ depends on the dataset, model, and perturbation budget $\epsilon_\infty$, no single fixed value is optimal across settings.
This motivates a \emph{dynamic} $q$-schedule that varies $q$ throughout the attack.
Using a held-out validation set, we select the best linear schedule, which sweeps $q$ from $-2$ to $1$.
Across \cityscapes, \voc, and \ade, \method{} with this schedule achieves the best overall average rank among CEPGD, SegPGD, CosPGD, JSPGD, and MaskedPGD in both global accuracy and mIoU.
These results show that controlling gradient concentration through Tsallis cross-entropy provides a simple and general way to construct stronger adversarial attacks for semantic segmentation.
In future work, we plan to make the choice of $q$ more adaptive and incorporate TsallisPGD into adversarial training to improve robustness.

\section*{Acknowledgment}
This research is supported by the National Research Foundation, Singapore, under its AI Singapore Programme (AISG Award No: AISG4-GC-2023-006-1B), the Ministry of Education, Singapore, under its Academic Research Tier-1 Grant (Award No: 1091/R-MA124-R205-0002), A*STAR under its MTC Individual Research Grant (Award No: M23M6c0113), and MTC Programmatic Grant (Award No: M23L9b0052).

\balance
\bibliographystyle{IEEEtran}
\bstctlcite{IEEEexample:BSTcontrol}
\bibliography{refs_short_abbr}

\end{document}